\documentclass{article}

\usepackage{microtype}
\usepackage{graphicx}
\usepackage{subcaption}
\usepackage{float} 
\usepackage{booktabs} 
\usepackage{multirow}
\usepackage{colortbl}

\usepackage{hyperref}

\usepackage[preprint]{icml2026}

\usepackage{amsmath}
\usepackage{amssymb}
\usepackage{mathtools}
\usepackage{amsthm}
\usepackage{bbm}
\usepackage{pifont}

\usepackage[capitalize,noabbrev]{cleveref}

\usepackage{tcolorbox}
\tcbuselibrary{skins}

\definecolor{facOrange}{HTML}{FAC074}
\definecolor{skyBlue}{HTML}{6C8EBF}

\newtcolorbox{defaultbox}{
  enhanced,
  arc=4mm,
  colback=gray!10,
  colframe=black,
  boxrule=1pt,
  left=6pt, right=6pt,
  top=6pt, bottom=6pt,
}

\newtcolorbox{orangebox}{
  enhanced,
  arc=4mm,
  colback=gray!10,
  colframe=facOrange,
  boxrule=1pt,
  left=6pt, right=6pt,
  top=6pt, bottom=6pt,
}

\newtcolorbox{bluebox}{
  enhanced,
  arc=4mm,
  colback=gray!10,
  colframe=skyBlue,
  boxrule=1pt,
  left=6pt, right=6pt,
  top=6pt, bottom=6pt,
}

\theoremstyle{plain}
\newtheorem{theorem}{Theorem}[section]
\newtheorem{proposition}[theorem]{Proposition}
\newtheorem{lemma}[theorem]{Lemma}

\theoremstyle{definition}
\newtheorem{definition}[theorem]{Definition}

\theoremstyle{remark}
\newtheorem{remark}[theorem]{Remark}

\usepackage[textsize=tiny]{todonotes}

\icmltitlerunning{RC-GRPO: Reward-Conditioned Group Relative Policy Optimization for Multi-Turn Tool Calling Agents}

\begin{document}

\twocolumn[
  \icmltitle{RC-GRPO: Reward-Conditioned Group Relative Policy Optimization for Multi-Turn Tool Calling Agents}



  \icmlsetsymbol{equal}{*}

  \begin{icmlauthorlist}
    \icmlauthor{Haitian Zhong}{casia,msra,zgca}
    \icmlauthor{Jixiu Zhai}{lzu,sii}
    \icmlauthor{Lei Song}{msra}
    \icmlauthor{Jiang Bian}{msra}
    \icmlauthor{Qiang Liu}{casia}
    \icmlauthor{Tieniu Tan}{casia,nju}
  \end{icmlauthorlist}

  \icmlaffiliation{casia}{New Laboratory
of Pattern Recognition (NLPR), State Key Laboratory of Multimodal Artificial Intelligence Systems (MAIS), Institute of Automation, Chinese Academy of Sciences}
  \icmlaffiliation{lzu}{School of Mathematics and Statistics, Lanzhou University}
  \icmlaffiliation{sii}{Shanghai Innovation Institute}
  \icmlaffiliation{msra}{Microsoft Research}
  \icmlaffiliation{nju}{Nanjing University}
  \icmlaffiliation{zgca}{Zhongguancun Academy}

  \icmlcorrespondingauthor{Lei Song}{lesong@microsoft.com}

  \icmlkeywords{Large Language Models, Reinforcement Learning, Tool Calling, Multi-turn Agents, Policy Optimization, Decision Transformer}

  \vskip 0.3in
]



\printAffiliationsAndNotice{}  
\begin{abstract}
Multi-turn tool calling is challenging for Large Language Models (LLMs) because rewards are sparse and exploration is expensive. A common recipe, SFT followed by GRPO, can stall when within-group reward variation is low (e.g., more rollouts in a group receive the all 0 or all 1 reward), making the group-normalized advantage uninformative and yielding vanishing updates. To address this problem, we propose \textbf{RC-GRPO} (Reward-Conditioned Group Relative Policy Optimization), which treats exploration as a controllable steering problem via discrete reward tokens. We first fine-tune a Reward-Conditioned Trajectory Policy (RCTP) on mixed-quality trajectories with reward goal special token (e.g., \texttt{<|high\_reward|>}, \texttt{<|low\_reward|>}) injected into the prompts, enabling the model to learn how to generate distinct quality trajectories on demand. Then during RL, we sample diverse reward tokens within each GRPO group and condition rollouts on the sampled token to improve within-group diversity, improving agvantage gains. On the Berkley Function Calling Leaderboard v4 (BFCLv4) multi-turn benchmark, our method yields consistantly improved performance than baselines, and the performance on Qwen-2.5-7B-Instruct even surpasses all closed-source API models.
\end{abstract}

\begin{figure*}[!t]
  \centering
  \includegraphics[width=\textwidth]{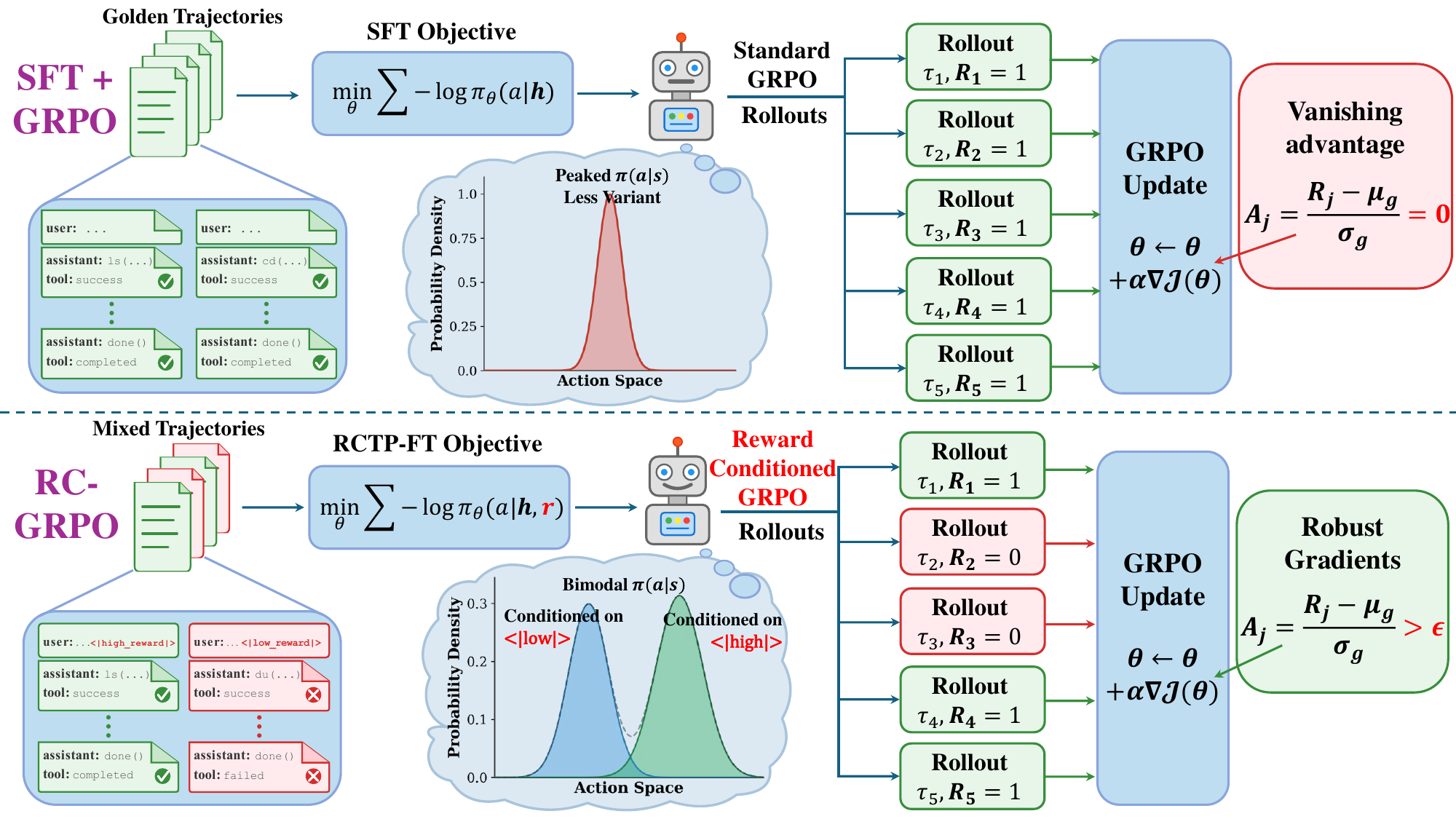}
  \caption{Overview of RC-GRPO. \textbf{(Top)} Standard GRPO optimization from an SFT initialization can sharply reduce rollout diversity, yielding low within-group reward variance and a weak/vanishing advantage signal. \textbf{(Bottom)} Our RC-GRPO conditions rollouts on discrete reward tokens and samples diverse tokens within each group, explicitly injecting variance and producing informative advantages.}
  \label{fig:overview}
\end{figure*}

\section{Introduction}
\label{sec:intro}

Tool-using Large Language Models (LLMs) can execute complex tasks by interleaving natural language reasoning with external API calls \cite{yao2023react,schick2023toolformer,gorilla2023,qin2023toolllm}. However, for multi-turn tool calling, success is often measured by sparse, trajectory-level rewards, making exploration costly. In this setting, group-relative policy-gradient methods such as Group Relative Policy Optimization (GRPO) are attractive because they avoid a critic, but their learning signal fundamentally depends on within-group variability: if the rewards within a sampled group have near-zero or even completely zero standard deviation, the group-normalized advantage becomes degenerate and policy updates vanish.

A practical failure mode then appears in the standard SFT-then-GRPO pipeline. SFT on optimal demonstrations intentionally produces a peaked policy (a strong ``golden-path'' prior), which reduces rollout diversity and can substantially reduce within-group reward variance under GRPO. This ``paradox of perfection'' is especially severe for multi-turn tool calling with partial observability \cite{kaelbling1998planning,sutton2018reinforcement}: a policy that is locally confident can repeatedly generate the same short-horizon behavior, leaving little informative contrast for group-relative advantages, consistent with recent analyses of vanishing updates when reward variability under the current policy is small \cite{razin2024vanishing}.

We propose \textbf{RC-GRPO} (Reward-Conditioned Group Relative Policy Optimization), which makes within-group diversity a controlled variable rather than a byproduct of sampling temperature. Inspired by return-conditioned generation \cite{chen2021decision,schmidhuber2019reinforcement}, we first train a Reward-Conditioned Trajectory Policy (RCTP) on mixed-quality trajectories so that the policy can reliably produce distinct behaviors under different tokens; then, during RL with GRPO, we sample diverse reward tokens within each group and condition rollouts on the sampled token, explicitly injecting variance and systematically restoring non-degenerate group-relative advantages.

Our principal contributions are summarized as follows.

\begin{itemize}
    \item We propose \textbf{RC-GRPO}. We first fine-tune a Reward-Conditioned Trajectory Policy (RCTP) on mixed-quality trajectories by appending a reward-goal special token to the prompt, so the policy learns to generate systematically different-quality trajectories under different tokens. We then train the model with reward-conditioned GRPO (RC-GRPO), where each GRPO group samples diverse reward tokens and conditions rollouts on them, ensuring non-degenerate within-group reward variation for group-normalized updates.
    \item We conduct a series of experiments on Berkley Function Calling Leaderboard v4 multi-turn tool calling split using LLaMA-3.1-8B-Instruct and Qwen2.5-7B-Instruct, and we report comparisons against standard SFT+GRPO baselines as well as several strong closed API models .
    \item We analyze training dynamics to justify whether improvements are attributable to increased randomness (entropy trajectories and entropy--reward correlation) and to quantify within-group learning signal quality (advantage spread with KL/gradient statistics). We further provide a variance-based theoretical analysis.
\end{itemize}

\section{Related Work}

\subsection{Tool-Calling LLMs and Benchmarks}
The capability of LLMs to use tools has been evaluated by benchmarks including the Berkeley Function Calling Leaderboard (BFCLv4) \cite{bfcl2024,gorilla2023}, API-Bank \cite{li2023apibank}, and ToolRL \cite{toolrl2024}. We focus on BFCLv4's multi-turn subset, which requires agents to maintain state across sequential tool calls to solve complex queries. In contrast, many tool-use benchmarks (e.g., ToolBench \cite{xu2023toolbench}) primarily evaluate single-turn function selection or parameter correctness without requiring persistent state tracking.

Toolformer \cite{schick2023toolformer} pioneered self-supervised tool learning, while ReAct \cite{yao2023react} introduced interleaved reasoning and acting. ToolLLM \cite{qin2023toolllm} scaled tool-use training to large collections of real-world APIs. While prompt engineering and standard SFT have shown promise, they often struggle with error recovery and long-horizon reasoning. Recent work focuses on closed-loop agents \cite{xi2023rise,wang2024executable}, which is closely related to our setting.

\subsection{RL-based Policy Optimization for LLMs}
Proximal Policy Optimization (PPO) \cite{schulman2017proximal} has been the standard RL algorithm for RLHF \cite{ouyang2022training,stiennon2020learning}. However, PPO requires a separate value network (critic), which doubles the memory footprint. GRPO \cite{deepseek2025r1} removes the critic by estimating advantages relative to a group of outputs generated from the same prompt, reducing memory by $\sim$50\%. Direct Preference Optimization (DPO) \cite{rafailov2023direct} similarly avoids explicit reward modeling but requires pairwise preferences.

Recent work has highlighted unique challenges in \emph{multi-turn} agent RL. RAGEN \cite{wang2025ragen} identifies the ``Echo Trap'' phenomenon where agents overfit to locally rewarded reasoning patterns, proposing trajectory-level rewards. SimpleTIR \cite{wang2025simpletir} addresses training instability from tool feedback deviating from pretrained distributions by filtering void turns during GRPO. Agentic RL with Implicit Step Rewards \cite{putta2025agentic} tackles sparse reward credit assignment via implicit process reward models. Agent Early Experience \cite{zhang2024agent} demonstrates that LLMs can perform model-based planning when fine-tuned to predict environment states.

\subsection{Return-Conditioned Learning}
Return-Conditioned Learning approaches, such as Decision Transformers \cite{chen2021decision} and Upside-Down RL \cite{schmidhuber2019reinforcement}, reframe reinforcement learning as a sequence modeling problem. Instead of estimating value functions or policy gradients, these methods learn a conditional policy $\pi(a|s, R)$ where $R$ represents the target return. During inference, the model is conditioned on a high return value to generate optimal trajectories. This paradigm has been extended to offline RL settings by the Trajectory Transformer \cite{janner2021offline}, which models the joint distribution of states, actions, and rewards.

\section{Method}
\label{sec:method}

Our method is motivated by a practical pathology observed when applying group-based policy optimization to strong tool-using LLMs. Starting from a high-quality SFT policy, rollouts within a GRPO group can become nearly identical, reducing intra-group reward variance and weakening the relative-advantage signal. In Sec.~\ref{sec:theory} (Q4 in Sec.~\ref{sec:experiments}), we formalize this ``gradient collapse'' phenomenon and show how reward-conditioned rollout generation restores within-group variance without requiring high policy entropy.

In Sec.~\ref{sec:preliminaries}, we first formalize multi-turn tool calling as a POMDP and introduce our reward-conditioned policy parameterization.
Sec.~\ref{sec:stage1} then describes Stage~1, where we fine-tune a Reward-Conditioned Trajectory Policy (RCTP) on mixed trajectories labeled by discrete reward tokens.
Sec.~\ref{sec:rcgrpo} presents Stage~2, where we optimize the policy using GRPO with reward-conditioned sampling to maintain within-group diversity.

\subsection{Preliminaries}
\label{sec:preliminaries}
We formalize multi-turn tool calling as a Partially Observable Markov Decision Process (POMDP). The agent interacts with the environment until episode termination. At each turn $t \in \{1, \ldots, T_{\text{max}}\}$, the user provides a natural language query $u_t$, and the agent generates an action $a_t$, a JSON-formatted tool call containing \texttt{name} and \texttt{args} fields. The environment executes this call and returns an observation $o_t$. The episode terminates when the agent invokes a termination action (e.g., \texttt{done()}) or reaches the maximum turn limit. The history $h_t = (u_0, a_0, o_0, \dots, u_{t-1}, a_{t-1}, o_{t-1}, u_t)$ accumulates all prior context up to the current query. A complete interaction forms a trajectory $\tau = (h_T, a_T, o_T)$.

Following the return-conditioned paradigm of Decision Transformers, we model the policy as $\pi_\theta(a_t | h_t, r)$, where $r \in \mathcal{R}$ is a discrete reward token indicating expected trajectory quality. The token set $\mathcal{R}$ contains two levels: \texttt{<|high\_reward|>} and \texttt{<|low\_reward|>}. This conditioning enables controllable generation: at inference, we set $r = \texttt{<|high\_reward|>}$ to elicit optimal behavior; during training, we sample diverse $r$ values to inject variance into rollout groups.

\subsection{Stage 1: Reward-Conditioned Trajectory Policy (RCTP) Finetuning}
\label{sec:stage1}
The first stage transforms the base LLM $\pi_{\text{base}}$ into a Reward-Conditioned Trajectory Policy (RCTP) capable of generating trajectories $\tau$ of varying quality based on the conditioning token $r$. We construct the mixed-quality dataset $\mathcal{D} = \{(\tau_i, r_i)\}_{i=1}^N$ by pairing each trajectory with its corresponding reward token. In practice, $\mathcal{D}$ is curated by combining expert (successful) trajectories from the benchmark ground truth with diverse failure trajectories generated by exploration rollouts, and then injecting the appropriate reward token into each example; see Appendix~\ref{app:dataset} for full details. \cite{bfcl2024}

\paragraph{Reward Token Quantization.}
We compute $R(\tau_i)$ for each trajectory using Eq.~\ref{eq:norm_reward} (defined in Sec.~\ref{sec:reward_function}). Given the binary nature of our reward signal, we map the outcomes directly to the categorical token set $\mathcal{R} = \{\texttt{<|high\_reward|>}, \texttt{<|low\_reward|>}\}$:
\begin{equation}
    r_{\text{token}}(R) = 
    \begin{cases} 
    \texttt{<|high\_reward|>} & \text{if } R = 1 \text{ (Success)} \\
    \texttt{<|low\_reward|>} & \text{if } R = 0 \text{ (Failure)}
    \end{cases}
\end{equation}

\paragraph{Finetuning Objective.}
We fine-tune $\pi_{\text{base}}$ to learn the conditional distribution $\pi_\theta(a_t | h_t, r)$. The reward token $r$ is prepended to the history $h_t$ before each assistant turn. The objective maximizes log-likelihood over $\mathcal{D}$:
\begin{equation}
    \mathcal{L}_{\text{RCTP}} = -\mathbb{E}_{(\tau,r)\sim\mathcal{D}} \left[ \sum_{t=1}^{T} \log \pi_\theta(a_t | h_t, r) \right]
\end{equation}
After training, we obtain the reference policy $\pi_{\text{ref}} \leftarrow \pi_\theta$, which serves as both the initialization and KL anchor for Stage 2. The model learns to correlate $r$ with action quality: conditioning on \texttt{<|high\_reward|>} produces optimal actions $a_t$, while \texttt{<|low\_reward|>} generates plausible failures.

\subsection{Stage 2: Reward-Conditioned Group Relative Policy Optimization (RC-GRPO)}
\label{sec:rcgrpo}

In Stage 2, we start from the Stage~1 reference policy $\pi_{\text{ref}}$ and optimize the trainable policy $\pi_\theta$ using GRPO with reward-conditioned sampling. For each prompt, GRPO draws a \emph{group} of $G$ trajectories; we obtain within-group diversity by sampling a discrete reward token $r \in \mathcal{R}$ for each trajectory from a fixed distribution $P_{\text{sample}}(r)$ (defined in Eq.~\ref{eq:sampling}) and conditioning rollouts on that token. This injects structured variance into each group---a prerequisite for non-degenerate, group-normalized advantages---and is not reliably achieved by temperature/entropy tuning alone when $\pi_{\text{ref}}$ is highly peaked.

\paragraph{Reward-Conditioned Rollout.}
For each prompt $u_0$, we generate a group of $G$ trajectories $\{\tau_1, \dots, \tau_G\}$. Unlike standard GRPO which samples from an unconditional $\pi_\theta(a|h)$, we first sample a reward token $r_j \sim P_{\text{sample}}(r)$ for each member $j \in \{1, \dots, G\}$:
\begin{equation}
\label{eq:sampling}
    P_{\text{sample}}(r) = 
    \begin{cases} 
    p & \text{if } r = \texttt{<|high\_reward|>} \\
    1-p & \text{if } r = \texttt{<|low\_reward|>}
    \end{cases}
\end{equation}
Here, $p$ is a hyperparameter set to match the proportion of successful (expert) trajectories in the RCTP's training dataset $\mathcal{D}$; the setting of this parameter can be found in Appendix~\ref{app:bfcl_curation}. This ensures that the sampling prior aligns with the model's learned distribution, preventing distribution shift while still guaranteeing variance injection. Then, for each turn $t$ in trajectory $\tau_j$, the policy generates:
\begin{equation}
    a_{t,j} \sim \pi_\theta(\cdot | h_{t,j}, r_j)
\end{equation}
This steers generation toward diverse quality modes: trajectories with $r_j = \texttt{<|high\_reward|>}$ tend toward optimal actions, while those with $r_j = \texttt{<|low\_reward|>}$ explore suboptimal paths—preventing variance collapse.

\paragraph{Trajectory-Level Reward Function.}
\label{sec:reward_function}
To align with the unified evaluation framework of modern agentic benchmarks, we adopt a unified trajectory-level reward function $R(\tau)$ that evaluates the overall success of the interaction. This reward serves dual purposes: (1) constructing the reward tokens $r \in \mathcal{R}$ for building dataset $\mathcal{D}$ in Stage 1, and (2) calculating advantages during GRPO in Stage 2.

We formulate the reward as a composition of two essential factors:
\begin{equation}
\label{eq:norm_reward}
    R(\tau) = R_{\text{state}} \cdot R_{\text{action}}
\end{equation}

\paragraph{State/Goal Consistency ($R_{\text{state}}$).}
Measures whether the agent drives the environment to a correct terminal condition. For BFCL, this compares the final environment state against the state obtained by replaying golden actions (e.g., exact match of file systems or databases).

\paragraph{Essential Action / Constraint Coverage ($R_{\text{action}}$).}
Measures whether required actions and constraints are satisfied over the full interaction. For BFCL, we check that every essential tool call from the ground truth appears in the trajectory with correct parameters.

In this framework, the reward acts as a binary success signal ($R(\tau) \in \{0, 1\}$). A trajectory is considered successful ($R=1$) only if it achieves the desired state \emph{and} satisfies all procedural constraints; otherwise, it is failure ($R=0$). This binary signal maps directly to our token set: $R=1 \implies \texttt{<|high\_reward|>}$ and $R=0 \implies \texttt{<|low\_reward|>}$.

\paragraph{Group-Relative Advantage.}We compute $R(\tau_j)$ for each trajectory using Eq.~\ref{eq:norm_reward}. Following GRPO \cite{deepseek2025r1}, we normalize rewards within the group to obtain advantage $A_j$. Because our group contains trajectories under different tokens $r_j \in \mathcal{R}$, the group statistics capture variance across quality modes:
\begin{equation}
\begin{split}
    A_j = \frac{R(\tau_j) - \mu_g}{\sigma_g + \epsilon_{\text{stab}}}, \quad \text{where } & \mu_g = \frac{1}{G}\sum_{k=1}^G R(\tau_k), \\
    & \sigma_g = \sqrt{\frac{1}{G}\sum_{k=1}^G (R(\tau_k) - \mu_g)^2}
\end{split}
\end{equation}
Here $\epsilon_{\text{stab}} > 0$ is a small numerical stability constant.

\paragraph{Optimization Objective.}
The RC-GRPO loss follows the PPO-style clipped objective used in GRPO \cite{deepseek2025r1}, adapted to the conditioned policy $\pi_\theta(a_t | h_t, r)$.
Let $\pi_{\theta_{\text{old}}}$ denote the policy that generated the sampled group, and define the (trajectory-level) importance ratio
\begin{equation}
\rho_j(\theta) = \prod_{t=1}^{T} \frac{\pi_\theta(a_{t,j} | h_{t,j}, r_j)}{\pi_{\theta_{\text{old}}}(a_{t,j} | h_{t,j}, r_j)}.
\end{equation}
Then the loss is
\begin{equation}
\label{eq:rcgrpo_loss}
\begin{aligned}
\mathcal{L}_{\text{RC-GRPO}}(\theta)
&= -\mathbb{E}_{u_0 \sim \mathcal{D}_{\text{train}}}
\Bigg[
\frac{1}{G}\sum_{j=1}^{G} \ell^{\text{clip}}_j(\theta) \\
&\quad\; - \beta\, D_{\text{KL}}\!\left(\pi_\theta(\cdot\mid h,r) \,\|\, \pi_{\text{ref}}(\cdot\mid h,r)\right)
\Bigg]
\end{aligned}
\end{equation}
where
\begin{equation}
\ell^{\text{clip}}_j(\theta)
= \min\!\left(
\rho_j(\theta) A_j,\;
\operatorname{clip}\big(\rho_j(\theta), 1-\epsilon, 1+\epsilon\big)\,A_j
\right).
\end{equation}
Here $\epsilon$ is the clipping range and $\beta$ is the KL coefficient. Over training, $\pi_\theta$ improves the expected $R(\tau)$ across sampled conditions while the clipping and KL regularization stabilize updates.

\begin{algorithm}[t]
  \caption{RC-GRPO Full Pipeline}
  \label{alg:rcgrpo}
  \begin{algorithmic}[1]
    \STATE {\bfseries Require:} Policy $\pi_\theta$, reference policy $\pi_{\text{ref}} \leftarrow \pi_{\text{base}}$
    \STATE {\bfseries Require:} Dataset $\mathcal{D}$, training prompts $\mathcal{D}_{\text{train}}$
    \STATE {\bfseries Require:} Hyperparameters: learning rate $\eta$, KL coefficient $\beta$, group size $G$, sampling distribution $P_{\text{sample}}(r)$
    \STATE {\bfseries Notation:} $\tau = (a_1, \ldots, a_T)$ trajectory; $h_t$ history at step $t$; $\rho_j(\theta) = \pi_\theta(\tau_j) / \pi_{\text{ref}}(\tau_j)$ importance ratio

    \STATE \textcolor{blue}{// STAGE 1: REWARD-CONDITIONED TRAJECTORY POLICY (RCTP) FINETUNING}
    \STATE Compute $R(\tau)$ for each trajectory $\tau \in \mathcal{D}$ (state match $\wedge$ essential-action coverage; see Sec.~\ref{sec:reward_function} and App.~\ref{app:reward_details}) \hfill $\triangleright$ Eq.~\ref{eq:norm_reward}
    \STATE Assign a reward token $r(\tau) \in \mathcal{R}$ via $r(\tau)=\texttt{<|high\_reward|>}$ if $R(\tau)=1$, else $\texttt{<|low\_reward|>}$ \hfill $\triangleright$ Sec.~\ref{sec:stage1}
    \STATE Define $\mathcal{L}_{\text{RCTP}} = -\mathbb{E}_{(\tau,r)\sim\mathcal{D}}\left[\sum_{t=1}^{T}\log \pi_{\text{ref}}(a_t\mid h_t, r)\right]$
    \STATE Update $\pi_{\text{ref}}$ by minimizing $\mathcal{L}_{\text{RCTP}}$

    \STATE \textcolor{blue}{// STAGE 2: REWARD-CONDITIONED GRPO (RC-GRPO)}
    \STATE Initialize $\pi_\theta \leftarrow \pi_{\text{ref}}$
    \REPEAT
    \FOR{each prompt $u_0 \in \mathcal{D}_{\text{train}}$}
      \FOR{$j = 1$ {\bfseries to} $G$}
        \STATE Sample $r_j \sim P_{\text{sample}}(r)$ and roll out $\tau_j \sim \pi_\theta(\cdot\mid h, r_j)$
      \ENDFOR
      \STATE Compute rewards $\{R(\tau_j)\}_{j=1}^{G}$ and group-normalized advantages $\{A_j\}_{j=1}^{G}$
      \STATE Update $\theta \leftarrow \theta - \eta\,\nabla_\theta\mathcal{L}_{\text{RC-GRPO}}(\theta)$, where
      \STATE \hspace{1em}\parbox[t]{0.9\linewidth}{$\mathcal{L}_{\text{RC-GRPO}}(\theta)= -\frac{1}{G}\sum_{j=1}^{G}\min\big(\rho_j(\theta)A_j,\;\mathrm{clip}(\rho_j(\theta),1-\epsilon,1+\epsilon)A_j\big) + \beta\,D_{\mathrm{KL}}(\pi_\theta\,\|\,\pi_{\text{ref}})$.}
    \ENDFOR
    \UNTIL{convergence}
  \end{algorithmic}
\end{algorithm}

\section{Experiments}
\label{sec:experiments}

\subsection{Experimental Setup}
We evaluate our method on \textbf{Berkeley Function Calling Leaderboard (BFCLv4)} \cite{bfcl2024} using two base models: \textbf{LLaMA-3.1-8B-Instruct} \cite{dubey2024llama3} and \textbf{Qwen2.5-7B-Instruct} \cite{yang2024qwen2}.

\paragraph{Compared Methods.}
We compare five training configurations. The \emph{Base Model} is the original instruction-tuned LLM without any additional task-specific finetuning on BFCLv4. Our three baselines are: (i) supervised finetuning (SFT) followed by Group Relative Policy Optimization (GRPO), (ii) SFT followed by reward-conditioned GRPO (RC-GRPO), and (iii) reward-conditioned trajectory-policy finetuning (RCTP-FT) followed by standard GRPO. Our full method combines RCTP-FT initialization with RC-GRPO in the RL stage.

\paragraph{API Model Baselines.}
In addition, we report BFCLv4 validation accuracy for several closed API models (Opus-4.5, Sonnet-4.5, GLM-4.7, Gemini-3-Pro, GPT-5.2) evaluated under the same API-calling setting.

At inference, we condition on $r = \texttt{<|high\_reward|>}$ for optimal performance. Hyperparameter configurations are provided in Appendix~\ref{app:experiment_settings}.


\subsection{Main Results}
We first present the overall performance of RC-GRPO compared to the baselines. Table~\ref{tab:main_results} reports accuracy on BFCLv4 (overall and by category).

\begin{table*}[t]
\centering
\small
\setlength{\tabcolsep}{4pt}
\renewcommand{\arraystretch}{1.08}
\begin{tabular}{@{}llccccc@{}}
\toprule
\textbf{Model} & \textbf{Method/Version} & \textbf{Overall Acc.} & \textbf{Base} & \textbf{Miss Func} & \textbf{Miss Param} & \textbf{Long Context} \\
\midrule
\multirow{5}{*}{\cellcolor{white}\textbf{Qwen2.5-7B}} & Base Model & 11.25\% & 11.11\% & 17.65\% & 4.54\% & 13.04\% \\
& SFT + GRPO & 48.75\% & 55.56\% & 35.29\% & 50.00\% &  52.17\%\\
& SFT + RC-GRPO & 46.25\% & 50.00\% & 35.29\% & 50.00\% & 47.82\% \\
& RCTP-FT + GRPO & \underline{73.75\%} & \underline{66.67\%} & \underline{64.71\%} & \underline{77.27\%} & \underline{82.61\%} \\
\rowcolor{blue!10}
& RCTP-FT + RC-GRPO (Ours) & \textbf{85.00\%} & \textbf{77.78\%} & \textbf{88.23\%}  & \textbf{81.81\%} & \textbf{91.30\%} \\
\midrule
\multirow{5}{*}{\cellcolor{white}\textbf{LLaMA-3.1-8B}} & Base Model & 0.00\% & 0.00\% & 0.00\% & 0.00\% & 0.00\% \\
& SFT + GRPO & 35.00\% & 38.89\% & 17.67\% & 40.91\% & 39.13\% \\
& SFT + RC-GRPO & 35.00\% & 44.44\% & 17.64\% & 36.36\% & 39.13\% \\
& RCTP-FT + GRPO & \underline{46.25\%} & \underline{33.33\%} & \textbf{35.29\%} & \underline{54.55\%} & \textbf{56.52\%} \\
\rowcolor{blue!10}
& RCTP-FT + RC-GRPO (Ours) & \textbf{48.75\%} & \textbf{38.89\%} & \textbf{35.29\%} & \textbf{60.87\%} & \underline{54.54\%} \\
\midrule
\textbf{Opus-4.5} & {\footnotesize claude-opus-4-5-20251101} & 61.25\% & 55.56\% & 65.22\% & 64.71\% & 59.09\% \\
\midrule
\textbf{GLM-4.7} & {\footnotesize glm-4.7} & 58.75\% & 55.56\% & 58.82\% & 54.55\% & 65.22\% \\
\midrule
\textbf{Sonnet-4.5} & {\footnotesize claude-sonnet-4-5-20250929} & 57.50\% & 44.44\% & 58.82\% & 59.09\% & 65.22\% \\
\midrule
\textbf{Gemini-3-Pro} & {\footnotesize gemini-3-pro-preview} & 53.75\% & 55.56\% & 64.71\% & 36.36\% & 60.87\% \\
\midrule
\textbf{GPT-5.2} & {\footnotesize gpt-5.2-2025-12-11} & 50.00\% & 38.89\% & 47.06\% & 45.45\% & 65.22\% \\
\bottomrule
\end{tabular}
\caption{Performance comparison on the BFCLv4 validation split (overall and by category).}
\label{tab:main_results}
\end{table*}

On \textbf{Qwen2.5-7B}, our full pipeline (RCTP-FT + RC-GRPO) achieves \textbf{85.00\%} overall accuracy, improving upon all reported baselines: Base Model (11.25\%), SFT + GRPO (48.75\%), SFT + RC-GRPO (46.25\%), and RCTP-FT + GRPO (73.75\%).
On \textbf{LLaMA-3.1-8B}, our method achieves \textbf{48.75\%} overall accuracy, improving over SFT + GRPO (35.00\%) and matching SFT + RC-GRPO (35.00\%), and outperforming the Base Model (0.00\%) and RCTP-FT + GRPO (46.25\%).
For reference, the best-performing closed API baseline in Table~\ref{tab:main_results} achieves 61.25\% overall accuracy, which remains below our best open-weights result (85.00\% on Qwen2.5-7B).
Full model/version details are summarized in Appendix~\ref{app:model_details}.

\subsection{Analysis}
To better understand where RC-GRPO helps (and why), we break the analysis into four focused questions, covering the roles of reward conditioning, RCTP initialization, training dynamics, and a supporting theoretical explanation.

\textbf{Q1: Does Reward Conditioning improve multi-turn tool calling ability compared to traditional GRPO?} \\
To isolate the effect of Reward Conditioning (RC) during RL, we compare traditional GRPO vs.\ RC-GRPO while holding the initialization fixed to RCTP-FT in Table~\ref{tab:ablation_rc}.

\begin{table}[ht]
\centering
\caption{Ablation: Effect of Reward Conditioning (RC) under RCTP-FT initialization.}
\label{tab:ablation_rc}
\small
\begin{tabular*}{\columnwidth}{@{\extracolsep{\fill}}lccc}
\toprule
\textbf{Setting} & \textbf{w/o RC} & \textbf{w/ RC} & \textbf{$\Delta$} \\
\midrule
\multicolumn{4}{l}{\textit{\textbf{LLaMA-3.1-8B}}} \\
RCTP-FT init & 46.25\% & 48.75\% & +2.50\% \\
\midrule
\multicolumn{4}{l}{\textit{\textbf{Qwen2.5-7B}}} \\
RCTP-FT init & 73.75\% & 85.00\% & +11.25\% \\
\bottomrule
\end{tabular*}
\end{table}

The results show that, once the policy is initialized with RCTP-FT, adding RC during RL consistently improves performance: +2.50\% on LLaMA-3.1-8B (46.25\%$\to$48.75\%) and +11.25\% on Qwen2.5-7B (73.75\%$\to$85.00\%).

\textbf{Q2: Is the RCTP initialization necessary?} \\
Table~\ref{tab:ablation_rctp_init} summarizes two complementary observations: (i) starting from an SFT initialization, adding reward conditioning during RL has little impact, and (ii) under RC-GRPO, switching the initialization from traditional FT to RCTP-FT yields a large gain.

\begin{table}[ht]
\centering
\caption{RCTP-FT is necessary: (A) from an SFT init, adding RC during RL has little impact; (B) under RC-GRPO, switching to RCTP-FT yields large gains.}
\label{tab:ablation_rctp_init}
\small
\begin{tabular*}{\columnwidth}{@{\extracolsep{\fill}}lccc}
\toprule
\multicolumn{4}{l}{\textbf{(A) Add RC during RL (SFT init)}} \\
\textbf{Model} & \textbf{w/o RC} & \textbf{w/ RC} & \textbf{$\Delta$} \\
\midrule
LLaMA-3.1-8B & 35.00\% & 35.00\% & +0.00\% \\
Qwen2.5-7B & 48.75\% & 46.25\% & -2.50\% \\
\midrule
\multicolumn{4}{l}{\textbf{(B) Switch init under RC-GRPO}} \\
\textbf{Model} & \textbf{w/o RCTP-FT} & \textbf{w/ RCTP-FT} & \textbf{$\Delta$} \\
\midrule
LLaMA-3.1-8B & 35.00\% & 48.75\% & +13.75\% \\
Qwen2.5-7B & 46.25\% & 85.00\% & +38.75\% \\
\bottomrule
\end{tabular*}
\end{table}

Table~\ref{tab:ablation_rctp_init}(A) shows that adding RC during RL from an SFT initialization yields negligible gains, likely because SFT does not expose the policy to mixed-quality, reward-conditioned trajectories, which highlights the necessity of RCTP-FT. In contrast, Table~\ref{tab:ablation_rctp_init}(B) shows that switching the initialization from SFT to RCTP-FT under RC-GRPO yields large improvements: +13.75\% on LLaMA-3.1-8B (35.00\%$\to$48.75\%) and +38.75\% on Qwen2.5-7B (46.25\%$\to$85.00\%).

\textbf{Q3: Does reward conditioning improve GRPO by stabilizing the update signal (non-vanishing KL/advantage spread), rather than by increasing randomness?} \\
\label{q3}
A natural concern is that reward conditioning might act like a proxy for higher sampling temperature (or an entropy bonus), i.e., improving exploration primarily by increasing randomness.
To test this hypothesis, we analyze training logs from four Qwen2.5-7B BFCLv4 runs and examine (i) the entropy trajectory and (ii) the correlation between entropy and training reward within each run.
Here, $\rho$ denotes the Pearson correlation coefficient computed across training steps between the logged entropy values and the logged training reward values (no smoothing).
We compute early/late summaries using 70-step windows, since 70 is approximately the last one-fifth of the total training horizon in these runs.

\begin{table}[t]
\centering
\caption{Entropy trajectory and entropy--reward correlation. RC-GRPO achieves improvements with \emph{decreasing} entropy and a negative entropy--reward correlation, suggesting its benefit is not explained by higher randomness.}
\label{tab:q3_entropy}
\scriptsize
\setlength{\tabcolsep}{3pt}
\renewcommand{\arraystretch}{1.05}
\begin{tabular*}{\columnwidth}{@{\extracolsep{\fill}}lccc}
\toprule
\textbf{Method} & \textbf{Entropy} & \textbf{$\Delta$} & \textbf{$\rho$ (entropy, reward)} \\
\midrule
\rowcolor{blue!10}
RCTP+RC (Ours) & 0.079$\to$0.037 & $\downarrow$54\% & $-0.15$** \\
RCTP+GRPO & 0.029$\to$0.066 & $\uparrow$127\% & $+0.49$*** \\
SFT+GRPO & 0.037$\to$0.216 & $\uparrow$489\% & $+0.06$ (n.s.) \\
SFT+RC & 0.104$\to$0.317 & $\uparrow$205\% & $+0.09$ (n.s.) \\
\bottomrule
\end{tabular*}
\vspace{0.2em}

\footnotesize{Early and late averages use 70-step windows (roughly one-fifth of training): Early = steps 1--70, Late = steps 280--350. ***: $p<0.001$, **: $p<0.01$, n.s.: not significant.}
\end{table}

\begin{figure*}[t]
  \centering
  \includegraphics[width=\textwidth]{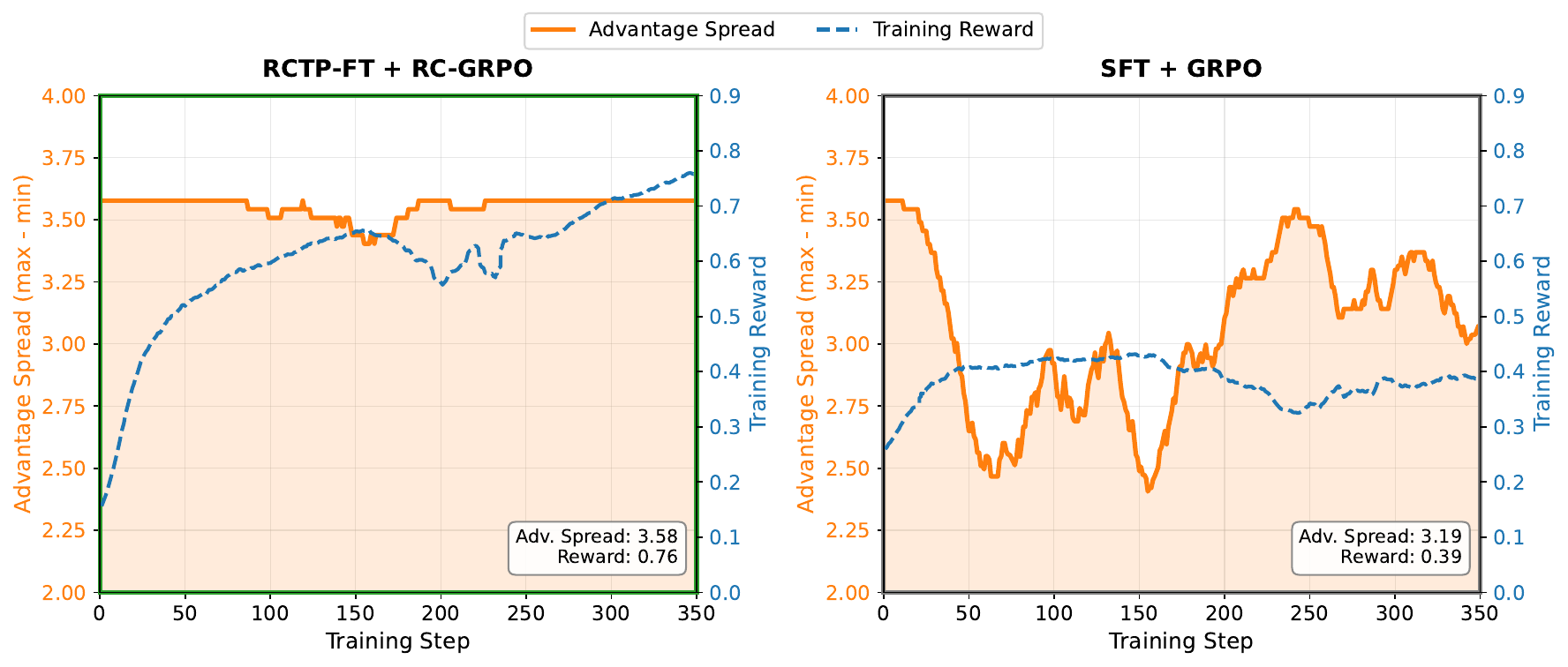}
  \caption{Training dynamics for Qwen2.5-7B on BFCLv4. We plot a proxy for within-group diversity (the gap between the maximum and minimum advantage within each sampled group) together with the training reward over time.}
  \label{fig:q3_advspread}
\end{figure*}

Taken together, Table~\ref{tab:q3_entropy} and Fig.~\ref{fig:q3_advspread} show that higher entropy is not required to maintain within-group diversity under RC-GRPO.
In our best-performing setting (RCTP+RC), entropy decreases over training (0.079$\to$0.037), yet reward improves and the entropy--reward correlation is negative ($\rho=-0.15$).
Meanwhile, under standard GRPO from the same RCTP initialization (RCTP+GRPO), reward is strongly positively correlated with entropy ($\rho=+0.49$), suggesting that entropy-based knobs act as an indirect and brittle route to exploration when the policy becomes peaked.

\textbf{Q4: Can we conduct theoretical justifications to explain why RC-GRPO works?} \\
\label{q4}
So far, we have focused on empirical results. We now complement them with a simple theoretical analysis and connect it to the observed training dynamics, to better illustrate why RC-GRPO is stable.
Specifically, we first analyze when standard GRPO can suffer from weak/vanishing group-normalized advantages, and then explain how reward-conditioned sampling restores within-group reward variance.

\label{sec:theory}
We do not claim a complete theory of multi-turn agent training dynamics; rather, we propose a minimal variance-based explanation for a common failure mode of standard GRPO when initialized from a peaked policy (e.g., after strong SFT). The key observation is that the Stage~2 GRPO update is mediated by the group-normalized advantage
$A_j = (R(\tau_j)-\mu_g)/(\sigma_g+\epsilon_{\text{stab}})$.
When group rollouts receive identical rewards, we have $R(\tau_j)=\mu_g$ for all $j$, so the numerator $(R(\tau_j)-\mu_g)$ is exactly zero and therefore $A_j=0$ (regardless of $\sigma_g$ and $\epsilon_{\text{stab}}$). In this case, the advantage-weighted update vanishes and learning stalls.
More generally, when rewards within a group are nearly identical, both $(R(\tau_j)-\mu_g)$ and $\sigma_g$ become small, yielding near-zero and/or noisy advantages (often effectively limited by the stability term $\epsilon_{\text{stab}}$), which again weakens the learning signal.
Proposition~\ref{prop:collapse} formalizes one sufficient condition under which such collapse can arise for standard (unconditioned) GRPO on peaked policies, while Proposition~\ref{prop:variance} shows how reward-conditioned sampling can prevent it by injecting between-mode reward variability within each group.

\begin{definition}[GRPO Advantage Collapse]
In GRPO with group size $G$, the advantage for trajectory $\tau_j$ is:
\begin{equation}
A_j = \frac{R(\tau_j) - \mu_g}{\sigma_g + \epsilon_{\text{stab}}}
\end{equation}
where $\mu_g$ and $\sigma_g$ are the within-group mean and standard deviation of $R(\tau)$. The advantage \emph{collapses} when $\sigma_g \to 0$, causing $A_j \to 0$ for all $j$ regardless of actual rewards.
\end{definition}

\begin{proposition}[Vanishing Gradient in Peaked Policies]
\label{prop:collapse}
Let $\pi_{\text{ref}}$ be trained on optimal demonstrations. Suppose that for each step $t$ (and history $h_t$ on the optimal trajectory), the SFT objective achieves a small per-step cross-entropy/KL to the optimal Dirac policy $\pi^*(\cdot|h_t)$:
\begin{equation}
    D_{\text{KL}}(\pi^*(\cdot|h_t)\,\|\,\pi_{\text{ref}}(\cdot|h_t)) \le \epsilon_{\text{sft}} .
\end{equation}
Then for a group of $G$ independent trajectories $\{\tau_1, \dots, \tau_G\}$ sampled from $\pi_{\text{ref}}$, the probability that all trajectories match the optimal trajectory $\tau^*$ satisfies
\begin{equation}
P(\tau_1 = \tau_2 = \cdots = \tau_G = \tau^*) \;\ge\; e^{-GT\epsilon_{\text{sft}}} \;\ge\; 1 - GT\epsilon_{\text{sft}}.
\end{equation}
On the event $\{\tau_1 = \cdots = \tau_G = \tau^*\}$, the within-group rewards are identical, so $\sigma_g = 0$ and $A_j = 0$ for all $j$. More generally, when $\pi_{\text{ref}}$ is sufficiently peaked so that rollouts within a group induce nearly identical rewards, we have $\sigma_g \ll \epsilon_{\text{stab}}$ and the normalized advantages are dominated by $\epsilon_{\text{stab}}$, making the effective learning signal negligible.
\end{proposition}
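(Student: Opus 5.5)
The plan is to reduce the group event to a single-trajectory bound, then to per-step bounds, and to read each per-step bound off the KL hypothesis. Since $\pi^*(\cdot\mid h_t)$ is a Dirac mass at the optimal action $a_t^*$, the KL divergence simplifies to
\[
D_{\text{KL}}(\pi^*(\cdot|h_t)\,\|\,\pi_{\text{ref}}(\cdot|h_t)) = -\log \pi_{\text{ref}}(a_t^*\mid h_t).
\]
The hypothesis then gives, at every history $h_t$ on the optimal trajectory,
\[
\pi_{\text{ref}}(a_t^*\mid h_t) \ge e^{-\epsilon_{\text{sft}}}.
\]
This is the only place the SFT assumption enters.

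Next I would use the chain rule for one rollout. The event $\{\tau=\tau^*\}$ requires the agent to choose $a_t^*$ at every step $t=1,\dots,T$. Conditioned on having followed $\tau^*$ up to step $t$, the history is exactly $h_t$ on the optimal trajectory, so the per-step bound applies. Multiplying over the $T$ steps gives
\[
P(\tau=\tau^*)=\prod_{t}\pi_{\text{ref}}(a_t^*\mid h_t)\ge e^{-T\epsilon_{\text{sft}}}.
\]
Because the $G$ rollouts are independent draws from $\pi_{\text{ref}}$, the group probability is the $G$-th power, which is $\ge e^{-GT\epsilon_{\text{sft}}}$. The final inequality $e^{-x}\ge 1-x$ follows from convexity of $e^{-x}$, namely its tangent line at $0$.

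The step I expect to need care is the chain-rule step, specifically whether the environment is deterministic. For the product to equal $P(\tau=\tau^*)$, the observations $o_t$ and user turns $u_t$ must be deterministic given the history, so that following $a^*$ reproduces the exact histories $h_t$ on which the KL bound is assumed. BFCL replays tool executions deterministically, so I would state this as a standing assumption. If the environment were stochastic, I would instead bound the probability that all actions along the realized path are optimal. In that case the conclusion still holds for the event that every rollout is action-optimal, which forces equal rewards because $R$ is a function of the final state and the actions.

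The remaining claims are direct. On the event $\{\tau_1=\cdots=\tau_G=\tau^*\}$, all $R(\tau_j)$ are equal, so $R(\tau_j)=\mu_g$ and $\sigma_g=0$; hence $A_j=0$ exactly. For the "more generally" statement, I would note that $|R(\tau_j)-\mu_g|\le\sigma_g\sqrt{G}$. Therefore, if $\sigma_g\ll\epsilon_{\text{stab}}$, then $|A_j|\le \sqrt{G}\,\sigma_g/(\sigma_g+\epsilon_{\text{stab}})\approx \sqrt{G}\,\sigma_g/\epsilon_{\text{stab}}\ll 1$. The advantage-weighted gradient in Eq.~\ref{eq:rcgrpo_loss} is then negligible, apart from the KL term, which only pulls $\pi_\theta$ toward $\pi_{\text{ref}}$. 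This part I would present as a qualitative remark rather than a sharp bound.
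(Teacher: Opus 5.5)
Your proposal is correct and is essentially the paper's proof: the Dirac KL reduces to $-\log\pi_{\text{ref}}(a_t^*\mid h_t)$, the per-step bounds multiply over the $T$ steps and then over the $G$ independent rollouts, $e^{-x}\ge 1-x$ gives the linear bound, and identical rewards force $A_j=0$. Your explicit determinism assumption makes precise what the paper's identity $P(\tau=\tau^*)=\prod_t\pi_{\text{ref}}(a_t^*\mid h_t^*)$ silently uses, and your bound $|A_j|\le\sqrt{G}\,\sigma_g/(\sigma_g+\epsilon_{\text{stab}})$ quantifies the ``more generally'' clause that the paper leaves informal; the only overreach is the stochastic-environment aside, where action-optimal rollouts can see different observations and reach different final states (and the KL hypothesis is only given on the optimal trajectory's histories), so equal rewards would not follow automatically, though this does not affect the deterministic setting the statement concerns.
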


\begin{proof}[Proof Sketch]
Standard SFT on optimal-only data $\mathcal{D}_{\text{opt}}$ minimizes the negative log-likelihood of the optimal action, which can be written as $D_{\text{KL}}(\pi^* \| \pi_{\text{ref}})$ when $\pi^*$ is a Dirac measure on the optimal action. If $D_{\text{KL}}(\pi^*(\cdot|h_t)\,\|\,\pi_{\text{ref}}(\cdot|h_t)) \le \epsilon_{\text{sft}}$, then $-\log \pi_{\text{ref}}(a_t^*|h_t) \le \epsilon_{\text{sft}}$, hence $\pi_{\text{ref}}(a_t^*|h_t) \ge e^{-\epsilon_{\text{sft}}} \ge 1-\epsilon_{\text{sft}}$. Over $T$ steps, $P(\tau=\tau^*) \ge e^{-T\epsilon_{\text{sft}}} \ge 1-T\epsilon_{\text{sft}}$, and for $G$ independent samples, $P(\tau_1=\cdots=\tau_G=\tau^*) \ge e^{-GT\epsilon_{\text{sft}}} \ge 1-GT\epsilon_{\text{sft}}$. When trajectories are identical, $R(\tau_1)=\cdots=R(\tau_G)$ so $\sigma_g=0$ and all advantages collapse (hence the advantage-weighted policy-gradient term is zero). See Appendix~\ref{app:proofs} for the full proof.
\end{proof}

The key insight of RC-GRPO is to \emph{inject variance through conditioning} via the reward token $r \in \mathcal{R}$:

\begin{proposition}[Variance Guarantee via Reward Conditioning]
\label{prop:variance}
In RC-GRPO, each trajectory $\tau_j$ is conditioned on token $r_j \sim P_{\text{sample}}(r)$. If $\pi_{\text{ref}}$ learns distinct modes such that $|\mathbb{E}[R(\tau)|r_i] - \mathbb{E}[R(\tau)|r_j]| \geq \epsilon$ for $r_i \neq r_j$, then the within-group variance is lower-bounded:
\begin{equation}
\mathbb{E}\left[\sigma_g^2\right] \geq \kappa \epsilon^2
\end{equation}
where $\sigma_g^2 = \frac{1}{G}\sum_{j=1}^G (R(\tau_j)-\mu_g)^2$ is the (biased) within-group second central moment and $\kappa > 0$ depends on $P_{\text{sample}}(r)$ and $G$. In particular, with $p$ defined as in Eq.~\ref{eq:sampling}, one may take $\kappa=\frac{G-1}{G}p(1-p)$.
\end{proposition}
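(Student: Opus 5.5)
The plan is to treat the pairs $(r_j,\tau_j)$, $j=1,\dots,G$, as i.i.d. draws from the joint law: first $r\sim P_{\text{sample}}$, then $\tau\sim\pi(\cdot\mid h,r)$. Write $X_j=R(\tau_j)$. The first step is the standard identity for the biased sample variance of i.i.d. variables:
\begin{equation*}
\mathbb{E}[\sigma_g^2]=\frac{G-1}{G}\,\mathrm{Var}(X_1).
\end{equation*}
This holds because $\mathbb{E}\big[\tfrac1G\sum_j (X_j-\mu_g)^2\big]=\mathbb{E}[X_1^2]-\mathbb{E}[\mu_g^2]$ and $\mathbb{E}[\mu_g^2]=\tfrac1G\mathbb{E}[X_1^2]+\tfrac{G-1}{G}(\mathbb{E}X_1)^2$. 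This step reduces the problem to lower-bounding the variance of a single conditioned rollout's reward.

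The second step applies the law of total variance with respect to the token:
\begin{equation*}
\mathrm{Var}(X_1)=\mathbb{E}\big[\mathrm{Var}(X_1\mid r_1)\big]+\mathrm{Var}\big(\mathbb{E}[X_1\mid r_1]\big)\ \ge\ \mathrm{Var}\big(m(r_1)\big),
\end{equation*}
where $m(r)=\mathbb{E}[R(\tau)\mid r]$. Since $r_1$ takes only two values, with probabilities $p$ and $1-p$, the random variable $m(r_1)$ is two-point. Its variance is therefore exactly $p(1-p)\,(m_{\text{high}}-m_{\text{low}})^2$. By the mode-separation hypothesis $|m_{\text{high}}-m_{\text{low}}|\ge\epsilon$, this is at least $p(1-p)\epsilon^2$.

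Combining the two steps gives $\mathbb{E}[\sigma_g^2]\ge \frac{G-1}{G}p(1-p)\epsilon^2$, which is the claimed bound with $\kappa=\frac{G-1}{G}p(1-p)$. This $\kappa$ is positive whenever $G\ge2$ and $0<p<1$, and I will state that requirement explicitly.

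The main obstacle is interpretive rather than computational. The hypothesis is stated for $\pi_{\text{ref}}$, but rollouts during Stage~2 come from $\pi_\theta$. I will therefore state the result for whichever policy generates the group, assuming the separation holds for that policy (at initialization, $\pi_\theta=\pi_{\text{ref}}$). I will also make the independence of the $r_j$ and the conditional independence of the rollouts explicit, since the identity in the first step depends on the i.i.d. structure.
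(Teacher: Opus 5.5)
Your proposal is correct and follows the paper's proof essentially step for step: the i.i.d.\ identity $\mathbb{E}[\sigma_g^2]=\frac{G-1}{G}\mathrm{Var}(X)$, the law of total variance, and the two-point variance $p(1-p)(\mu_h-\mu_l)^2$ of the conditional mean, with the two lemmas merely applied in the opposite order. Your explicit requirements $G\ge 2$ and $0<p<1$, and your remark that the separation must hold for whichever policy actually generates the group, are sensible clarifications that the paper leaves implicit (its proof simply samples from $\pi_{\text{ref}}$).
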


\begin{proof}[Proof Sketch]
By the law of total variance, $\text{Var}(R(\tau)) \ge \text{Var}_{r}(\mathbb{E}[R(\tau)|r])$. If the conditional means are separated by at least $\epsilon$, then $\text{Var}(R(\tau))\ge p(1-p)\epsilon^2$. Since $\sigma_g^2$ is the group second central moment over $G$ i.i.d.\ draws, $\mathbb{E}[\sigma_g^2]=\frac{G-1}{G}\text{Var}(R(\tau)) \ge \frac{G-1}{G}p(1-p)\epsilon^2$. See Appendix~\ref{app:proofs} for details.
\end{proof}

\begin{remark}[Convergence Implication]
\label{rem:convergence}
The expected variance lower bound $\mathbb{E}[\sigma_g^2] \geq \kappa\epsilon^2$ has direct implications for convergence speed. The signal-to-noise ratio of the GRPO gradient estimator depends on the typical scale of $\sigma_g$; by enforcing between-mode variability through $r$ and thereby preventing collapse of $\sigma_g$ in expectation, RC-GRPO maintains informative group-relative advantages throughout training. This contrasts with standard GRPO on peaked SFT models, where groups concentrate on a single trajectory and $\sigma_g$ collapses.
\end{remark}

In combination, Proposition~\ref{prop:collapse} and Proposition~\ref{prop:variance} explain why RC-GRPO avoids within-group variance collapse in practice. To empirically validate these theoretical claims, we analyze training dynamics in the late phase (the last 70 training steps) across four Qwen2.5-7B runs. Guided by Proposition~\ref{prop:collapse} and Proposition~\ref{prop:variance}, we operationalize within-group diversity using the \emph{advantage spread}---the difference between the maximum and minimum group-normalized advantage within each sampled group---as a direct proxy for how much signal the group normalization provides.

\begin{table}[t]
\centering
\caption{Late-phase training dynamics (last 70 training steps). RC-GRPO maintains high within-group diversity while keeping policy entropy low, consistent with the variance-guarantee mechanism.}
\label{tab:q4_dynamics}
\scriptsize
\setlength{\tabcolsep}{2pt}
\renewcommand{\arraystretch}{1.05}
\begin{tabular*}{\columnwidth}{@{\extracolsep{\fill}}lcccc}
\toprule
\textbf{Method} & \textbf{Spread} & \textbf{KL} & \textbf{$\|g\|$} & \textbf{$H$} \\
\midrule
\rowcolor{blue!10}
RCTP+RC (Ours) & \textbf{3.58} & 0.0010 & 2.7 & \textbf{0.037} \\
RCTP+GRPO & 3.46 & 0.0004 & 2.5 & 0.066 \\
SFT+GRPO & 3.22 & 0.0040 & 6.5 & 0.216 \\
SFT+RC & 3.51 & 0.0033 & 14.2 & 0.317 \\
\bottomrule
\end{tabular*}
\vspace{0.2em}

\footnotesize{Spread: max--min advantage within a sampled group; $\|g\|$: gradient norm; $H$: entropy.}
\end{table}

Table~\ref{tab:q4_dynamics} supports the variance-guarantee interpretation. Our method achieves the largest advantage spread (3.58) while maintaining the lowest entropy (0.037), indicating that within-group diversity is preserved without relying on increased policy randomness. In contrast, SFT + GRPO exhibits substantially higher entropy yet lower advantage spread, suggesting that entropy-based exploration alone is insufficient to reliably restore the within-group variability needed for informative group-normalized advantages.
Across all runs, gradient norms remain non-zero, but the combination of high spread and low entropy in RC-GRPO is most consistent with the intended between-mode separation induced by reward conditioning.

\section{Conclusion}
We introduced RC-GRPO, a two-stage pipeline (RCTP finetuning + reward-conditioned GRPO) for multi-turn tool calling that mitigates variance collapse in group-normalized policy optimization by making within-group diversity a controlled variable. Empirically, RC-GRPO delivers consistent gains on BFCLv4 for both Qwen2.5-7B and LLaMA-3.1-8B, with particularly large improvements when combined with the RCTP initialization. Our analysis suggests these gains are explained by more informative group-relative advantages (and non-vanishing updates) rather than simply increasing policy randomness. Theoretically, we provide a variance-based perspective that clarifies when standard GRPO can degenerate under peaked SFT policies and why reward-conditioned sampling prevents collapse. We hope this framing helps bridge controllable generation and group-based RL for training more reliable multi-turn tool-using agents.

\section*{Impact Statement}
This paper presents work whose goal is to advance the field of machine learning by improving the stability of reinforcement-learning optimization for multi-turn tool-calling agents. More reliable tool use can support beneficial applications such as automating routine digital tasks and enabling more capable assistants; at the same time, like other tool-using LLM systems, such capabilities may have dual-use implications if deployed without appropriate operational safeguards (e.g., access controls and monitoring). We expect the primary societal consequence of this work to be enabling further research on robust agent training and evaluation, rather than introducing fundamentally new deployment risks beyond those already associated with tool-enabled language models.

\bibliography{example_paper}
\bibliographystyle{icml2026}

\newpage
\appendix
\onecolumn
\section{Theoretical Proofs}
\label{app:proofs}

This appendix provides formal proofs for the theoretical results presented in the main text.

\subsection{Proof of Proposition~\ref{prop:collapse}}

We first state the definitions and lemmas used in the proof.

\begin{definition}[Probability Space]
Let $(\mathcal{A}, \Sigma, P)$ be a discrete probability space over the set of actions $\mathcal{A}$. Let $\pi_{\text{ref}}$ and $\pi^*$ be probability measures on this space.
\end{definition}

\begin{lemma}[KL-Probability Bound]
\label{lem:kl_bound}
Let $\pi^*$ be a Dirac measure concentrated at $a^*$, i.e., $\pi^*(a^*) = 1$. If $D_{\text{KL}}(\pi^* \| \pi) \le \epsilon$, then:
\begin{equation}
\pi(a^*) \ge e^{-\epsilon} \ge 1 - \epsilon
\end{equation}
\end{lemma}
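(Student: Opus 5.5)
The plan is to compute the KL divergence explicitly, using the fact that $\pi^*$ is a Dirac measure. On a discrete space,
\[
D_{\text{KL}}(\pi^*\|\pi)=\sum_{a\in\mathcal{A}}\pi^*(a)\log\frac{\pi^*(a)}{\pi(a)} .
\]
Every term with $a\neq a^*$ vanishes, since $\pi^*(a)=0$ and we use the convention $0\log(0/q)=0$. The only surviving term is at $a=a^*$, where $\pi^*(a^*)=1$, so
\[
D_{\text{KL}}(\pi^*\|\pi)=-\log\pi(a^*).
\]
This also covers the degenerate case $\pi(a^*)=0$. There the divergence is $+\infty$, which contradicts the hypothesis $D_{\text{KL}}(\pi^*\|\pi)\le\epsilon<\infty$, so we may assume $\pi(a^*)>0$ throughout.

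From the hypothesis we then have $-\log\pi(a^*)\le\epsilon$, that is, $\log\pi(a^*)\ge-\epsilon$. Exponentiating, and using that $\exp$ is monotone increasing, gives the first inequality $\pi(a^*)\ge e^{-\epsilon}$.

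The second inequality $e^{-\epsilon}\ge 1-\epsilon$ is the standard convexity bound $e^{x}\ge 1+x$ applied at $x=-\epsilon$. To see that bound, note that $f(x)=e^{x}-1-x$ satisfies $f(0)=0$, $f'(0)=0$ and $f''(x)=e^{x}>0$, so $f$ is convex with its minimum at $0$. Hence $f\ge 0$ everywhere.

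There is no real obstacle in this argument. The only points needing care are the $0\log 0$ convention and the possibility $\pi(a^*)=0$, both handled above.
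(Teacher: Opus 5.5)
Your proof is correct and takes the same route as the paper: reduce $D_{\text{KL}}(\pi^*\|\pi)$ to $-\log\pi(a^*)$ using the Dirac structure, exponentiate, and finish with $e^{-x}\ge 1-x$. Your handling of the $0\log 0$ convention and the case $\pi(a^*)=0$, and your convexity proof of $e^{x}\ge 1+x$, spell out details the paper leaves implicit.
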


\begin{proof}[Proof of Lemma~\ref{lem:kl_bound}]
Since $\pi^*$ is a Dirac measure:
\begin{equation}
D_{\text{KL}}(\pi^* \| \pi) = \sum_{a} \pi^*(a) \log \frac{\pi^*(a)}{\pi(a)} = 1 \cdot \log \frac{1}{\pi(a^*)} = -\log \pi(a^*)
\end{equation}
Given $D_{\text{KL}}(\pi^* \| \pi) \le \epsilon$, we have $-\log \pi(a^*) \le \epsilon \implies \pi(a^*) \ge e^{-\epsilon}$. Using the inequality $e^{-x} \ge 1-x$ for $x \ge 0$, we get $\pi(a^*) \ge 1 - \epsilon$.
\end{proof}

\begin{proof}[Proof of Proposition~\ref{prop:collapse}]
Fix an optimal trajectory $\tau^*=(h_1^*,a_1^*,\dots,h_T^*,a_T^*)$ and let $h_t^*$ denote the history on this optimal trajectory at step $t$. Under the proposition assumption,
\begin{equation}
D_{\mathrm{KL}}\!\left(\pi^*(\cdot\mid h_t^*)\,\middle\|\,\pi_{\mathrm{ref}}(\cdot\mid h_t^*)\right)\le \epsilon_{\text{sft}}
\quad\Rightarrow\quad
\pi_{\mathrm{ref}}(a_t^*\mid h_t^*) \ge e^{-\epsilon_{\text{sft}}}
\end{equation}
by Lemma~\ref{lem:kl_bound}. Therefore the probability that a single rollout from $\pi_{\mathrm{ref}}$ exactly follows the optimal trajectory is
\begin{equation}
P(\tau=\tau^*) \;=\; \prod_{t=1}^{T}\pi_{\mathrm{ref}}(a_t^*\mid h_t^*) \;\ge\; e^{-T\epsilon_{\text{sft}}} \;\ge\; 1-T\epsilon_{\text{sft}},
\end{equation}
where we used $\prod_{t=1}^T e^{-\epsilon_{\text{sft}}}=e^{-T\epsilon_{\text{sft}}}$ and the inequality $e^{-x}\ge 1-x$ for $x\ge 0$.

Now sample $G$ independent trajectories $\tau_1,\dots,\tau_G \overset{\text{i.i.d.}}{\sim}\pi_{\mathrm{ref}}$. Then
\begin{equation}
P(\tau_1=\cdots=\tau_G=\tau^*) \;=\; P(\tau=\tau^*)^G \;\ge\; e^{-GT\epsilon_{\text{sft}}} \;\ge\; 1-GT\epsilon_{\text{sft}}.
\end{equation}

On the event $\{\tau_1=\cdots=\tau_G\}$, we have $R(\tau_1)=\cdots=R(\tau_G)$, hence $\sigma_g=0$ and $R(\tau_j)-\mu_g=0$ for every $j$. Therefore $A_j=\frac{R(\tau_j)-\mu_g}{\sigma_g+\epsilon_{\text{stab}}}=0$ for all $j$, and the advantage-weighted GRPO policy-gradient term for that group,
\begin{equation}
\frac{1}{G}\sum_{j=1}^G A_j \sum_{t=1}^{T}\nabla_\theta \log \pi_\theta(a_{t,j}\mid h_{t,j})
\end{equation}
is exactly zero.
\end{proof}

\subsection{Proof of Proposition~\ref{prop:variance}}

We first state the key lemmas used in this proof.

\begin{lemma}[Law of Total Variance]
\label{lem:total_variance}
For random variables $X$ and $Y$, the total variance of $X$ decomposes as:
\begin{equation}
\text{Var}(X) = \underbrace{\mathbb{E}_{Y}[\text{Var}(X|Y)]}_{\text{within-group variance}} + \underbrace{\text{Var}_{Y}(\mathbb{E}[X|Y])}_{\text{between-group variance}}
\end{equation}
Since the first term is non-negative, we have $\text{Var}(X) \ge \text{Var}_{Y}(\mathbb{E}[X|Y])$.
\end{lemma}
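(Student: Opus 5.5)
The last stated item is Lemma~\ref{lem:total_variance}, the law of total variance, and it is the statement to prove. I will derive it from the definition $\text{Var}(X)=\mathbb{E}[(X-\mathbb{E}X)^2]$, assuming $\mathbb{E}[X^2]<\infty$, and from the tower property $\mathbb{E}[\mathbb{E}[Z\mid Y]]=\mathbb{E}[Z]$. In the setting where the lemma is used, $Y=r$ takes two values and $X=R(\tau)\in\{0,1\}$, so all conditional expectations are elementary weighted sums and no measure-theoretic subtleties arise.

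Write $m(Y)=\mathbb{E}[X\mid Y]$ and $\mu=\mathbb{E}[X]$. Then decompose
\[
X-\mu=(X-m(Y))+(m(Y)-\mu),
\]
square both sides and take expectations. This gives three terms.

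\textbf{The first term.} Conditioning on $Y$ and using the tower property,
\[
\mathbb{E}[(X-m(Y))^2]=\mathbb{E}_Y\big[\mathbb{E}[(X-m(Y))^2\mid Y]\big]=\mathbb{E}_Y[\text{Var}(X\mid Y)].
\]

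\textbf{The last term.} Since $\mathbb{E}[m(Y)]=\mu$ by the tower property, $\mathbb{E}[(m(Y)-\mu)^2]=\text{Var}_Y(\mathbb{E}[X\mid Y])$.

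\textbf{The cross term.} This is the one step needing care:
\[
2\,\mathbb{E}\big[(X-m(Y))(m(Y)-\mu)\big].
\]
Condition on $Y$. The factor $m(Y)-\mu$ is $Y$-measurable, so it pulls out of the inner conditional expectation, leaving $\mathbb{E}[X-m(Y)\mid Y]=m(Y)-m(Y)=0$. Hence the cross term vanishes.

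Combining the three terms yields the decomposition
\[
\text{Var}(X)=\mathbb{E}_Y[\text{Var}(X\mid Y)]+\text{Var}_Y(\mathbb{E}[X\mid Y]).
\]

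The inequality then follows because $\text{Var}(X\mid Y)\ge 0$ pointwise, so its expectation is nonnegative, which gives $\text{Var}(X)\ge\text{Var}_Y(\mathbb{E}[X\mid Y])$.

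The main obstacle is the cross term. The argument depends on justifying that $Y$-measurable factors can be pulled out of the conditional expectation. For finite $Y$ I would verify this directly by writing
\[
\mathbb{E}[\,\cdot\,]=\sum_y P(Y=y)\,\mathbb{E}[\,\cdot\mid Y=y].
\]
On each event $\{Y=y\}$, $m(Y)-\mu$ is a constant, and $\mathbb{E}[X-m(y)\mid Y=y]=0$ by the definition of $m(y)$.
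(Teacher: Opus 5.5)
Your proof is correct and matches the paper in substance: the paper states the law of total variance as a standard fact without deriving it, and justifies only the inequality by noting that $\mathbb{E}_Y[\text{Var}(X\mid Y)]\ge 0$, which is exactly your final step. Your splitting $X-\mu=(X-m(Y))+(m(Y)-\mu)$ with the vanishing cross term is the standard derivation the paper omits, and your added assumption $\mathbb{E}[X^2]<\infty$ is harmless because in the application $X=R(\tau)\in[0,1]$.
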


\begin{lemma}[Expected Group Second Central Moment]
\label{lem:expected_group_var}
Let $X_1,\dots,X_G$ be i.i.d.\ with finite variance $\text{Var}(X)$. Define $\bar{X}=\frac{1}{G}\sum_{j=1}^G X_j$ and $S_G^2=\frac{1}{G}\sum_{j=1}^G (X_j-\bar{X})^2$. Then
\begin{equation}
\mathbb{E}[S_G^2] = \frac{G-1}{G}\,\text{Var}(X).
\end{equation}
\end{lemma}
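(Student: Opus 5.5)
The statement is Lemma~\ref{lem:expected_group_var}, and I will prove it by direct expansion of the second central moment. This is a standard bias computation.

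\textbf{Step 1: reduce to a centred variable.} Write $m=\mathbb{E}[X]$ and $Y_j = X_j - m$. Then $\bar{Y} = \bar{X}-m$ and $X_j-\bar{X} = Y_j-\bar{Y}$. So $S_G^2$ is unchanged when each $X_j$ is replaced by $Y_j$, and I may assume $\mathbb{E}[X]=0$ and $\mathbb{E}[X_j^2]=\text{Var}(X)$.

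\textbf{Step 2: use the algebraic identity.} By expanding the square,
\begin{equation*}
\sum_{j=1}^G (Y_j-\bar{Y})^2 = \sum_{j=1}^G Y_j^2 - G\bar{Y}^2 .
\end{equation*}
Taking expectations, the first term contributes $G\,\text{Var}(X)$.

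\textbf{Step 3: compute $\mathbb{E}[\bar{Y}^2]$.} Expand $\bar{Y}^2 = \frac{1}{G^2}\sum_{j,k} Y_jY_k$. Independence and zero mean kill every cross term $j\neq k$, since $\mathbb{E}[Y_jY_k]=\mathbb{E}[Y_j]\,\mathbb{E}[Y_k]=0$. Hence $\mathbb{E}[\bar{Y}^2]=\text{Var}(X)/G$, and $G\,\mathbb{E}[\bar{Y}^2]=\text{Var}(X)$.

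\textbf{Step 4: combine.} Steps 2 and 3 give $\mathbb{E}\big[\sum_j (Y_j-\bar{Y})^2\big]=(G-1)\,\text{Var}(X)$. Dividing by $G$ yields the claim $\mathbb{E}[S_G^2]=\frac{G-1}{G}\,\text{Var}(X)$.

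There is no real obstacle. The only point needing care is Step 3: finite variance guarantees the products $Y_jY_k$ are integrable (by Cauchy--Schwarz), so the expectation of the cross terms may be taken term by term.

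For the application in Proposition~\ref{prop:variance}, the variables $X_j=R(\tau_j)$ must be i.i.d. This holds there because each pair $(r_j,\tau_j)$ is drawn independently from the same joint law, which fixes the $P_{\text{sample}}$ marginal over $r_j$. The rewards are bounded, so their variance is finite.
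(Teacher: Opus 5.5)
Your proof is correct and takes essentially the same route as the paper: both use the identity $\sum_j (X_j-\bar X)^2=\sum_j X_j^2-G\bar X^2$ together with $\mathbb{E}[\bar X^2]$ computed from $\text{Var}(\bar X)=\text{Var}(X)/G$. The only differences are that the paper keeps the $\mu^2$ terms and lets them cancel instead of centering first, and that it uses $\text{Var}(\bar X)=\text{Var}(X)/G$ without the cross-term and integrability justification you supply.
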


\begin{proof}
Using $S_G^2=\frac{1}{G}\sum_{j=1}^G X_j^2-\bar{X}^2$ and letting $\mu=\mathbb{E}[X]$, we have
\begin{equation}
\mathbb{E}[S_G^2] = \mathbb{E}[X^2] - \mathbb{E}[\bar{X}^2]
= \mathbb{E}[X^2] - \big(\text{Var}(\bar{X}) + (\mathbb{E}[\bar{X}])^2\big)
= (\text{Var}(X)+\mu^2) - \left(\frac{1}{G}\text{Var}(X)+\mu^2\right),
\end{equation}
which yields $\mathbb{E}[S_G^2]=\frac{G-1}{G}\text{Var}(X)$.
\end{proof}

\begin{proof}[Proof of Proposition~\ref{prop:variance}]
Let $r\in\{\texttt{high},\texttt{low}\}$ be sampled from $P_{\mathrm{sample}}$ with $p:=P_{\mathrm{sample}}(r=\texttt{high})\in(0,1)$. Let $\tau\sim \pi_{\mathrm{ref}}(\cdot\mid r)$ and define the bounded reward random variable $X:=R(\tau)\in[0,1]$. Let $\mu_h:=\mathbb{E}[X\mid r=\texttt{high}]$ and $\mu_l:=\mathbb{E}[X\mid r=\texttt{low}]$.

By Lemma~\ref{lem:total_variance} (law of total variance),
\begin{equation}
\text{Var}(X) \;\ge\; \text{Var}_{r}\!\big(\mathbb{E}[X\mid r]\big)
 \;=\; p(1-p)\,(\mu_h-\mu_l)^2.
\end{equation}
Under the proposition assumption $|\mu_h-\mu_l|\ge \epsilon$, we obtain
\begin{equation}
\text{Var}(X) \ge p(1-p)\,\epsilon^2.
\end{equation}

Now draw a GRPO group by sampling $G$ i.i.d.\ pairs $(r_j,\tau_j)$ with $r_j\sim P_{\mathrm{sample}}$ and $\tau_j\sim \pi_{\mathrm{ref}}(\cdot\mid r_j)$, and define $X_j:=R(\tau_j)$ and $\sigma_g^2:=\frac{1}{G}\sum_{j=1}^G (X_j-\bar{X})^2$ where $\bar{X}=\frac{1}{G}\sum_{j=1}^G X_j$. By Lemma~\ref{lem:expected_group_var},
\begin{equation}
\mathbb{E}[\sigma_g^2] = \frac{G-1}{G}\,\text{Var}(X)
\;\ge\; \frac{G-1}{G}\,p(1-p)\,\epsilon^2.
\end{equation}
Thus we can take $\kappa=\frac{G-1}{G}p(1-p)$.
\end{proof}

\section{The Details of the Dataset Formulations}
\label{app:dataset}

\subsection{Berkeley Function Calling Leaderboard (BFCLv4)}
The Berkeley Function Calling Leaderboard (BFCLv4) \cite{bfcl2024} is a comprehensive benchmark designed to evaluate the tool-calling capabilities of Large Language Models. It encompasses a wide variety of APIs (e.g., Java, JavaScript, Python) and diverse scenarios. Our work primarily focuses on the \textbf{multi-turn dataset} section, which is specifically designed to test an agent's ability to maintain context, handle state changes, and execute sequential tools to achieve a complex goal.

\textbf{Benchmark Statistics.}
The multi-turn subset consists of 200 high-quality, human-curated evaluation trajectories. Each trajectory involves multiple steps (typically 3-10 turns) where the agent must interact with a simulated environment. The primary domains include:
\begin{itemize}
    \item \textbf{GorillaFileSystem:} A simulated Linux file system supporting commands like \texttt{ls}, \texttt{cd}, \texttt{grep}, \texttt{find}, \texttt{mv}, etc.
    \item \textbf{TwitterAPI:} A mock social media API for posting tweets, replying, and managing followers.
    \item \textbf{MathAPI:} Utilities for statistical calculations (mean, standard deviation, logarithm).
    \item \textbf{TicketAPI:} A system for managing support tickets (create, resolve, query).
\end{itemize}

\subsubsection{Data Curation Pipeline}
\label{app:bfcl_curation}

We employ a systematic pipeline to construct aligned SFT, RCTP-FT, and RL datasets for BFCLv4 multi-turn evaluation. All datasets share the same ID-based train/test split (90\%/10\%) to ensure zero question overlap between training and evaluation.

\paragraph{Stage 1: SFT and RCTP-FT Data.}

\textit{Source.} We derive training data from two complementary sources. Since the RCTP-FT training dataset uses an exact 1:1 expert-to-failure ratio (800:800), we set $p=0.5$ in Eq.~\ref{eq:sampling}.
\begin{itemize}
    \item \textbf{Expert Trajectories (800):} Extracted from the official BFCLv4 ground truth (\texttt{possible\_answer/} files). These represent optimal execution paths verified by the benchmark maintainers across four multi-turn categories: \texttt{base}, \texttt{long\_context}, \texttt{miss\_func}, and \texttt{miss\_param} (200 samples each).
    \item \textbf{Failure Trajectories (800):} Collected via RL exploration rollouts using Qwen2.5-7B-Instruct. These capture diverse failure modes including incorrect tool selection, parameter errors, and premature termination.
\end{itemize}

\textit{Split.} We apply an ID-based split: 720 train / 80 test. This ensures that questions sharing the same numeric ID (which appear across multiple categories) are kept together in either train or test, preventing data leakage.

\textit{Format.} Both SFT and RCTP-FT datasets use OpenAI chat format with a combined task prompt (``Step 1: ... Step 2: ... Complete all the steps above using the available tools. Call done() when you have finished all tasks.''). The key distinction:
\begin{itemize}
    \item \textbf{SFT:} Expert trajectories only, no reward tokens.
    \item \textbf{RCTP-FT:} Both expert and failure trajectories, with reward token appended to the first user message: \texttt{[Reward Goal: <|high\_reward|>]} for success and \texttt{[Reward Goal: <|low\_reward|>]} for failure.
\end{itemize}

\textit{Statistics.} Table~\ref{tab:bfcl_stats} summarizes the curated data. The RCTP-FT dataset maintains an exact 50-50 balance between success and failure trajectories, which is essential for learning reward-conditioned generation.

\begin{table}[h]
\centering
\caption{SFT and RCTP-FT Dataset Statistics for BFCLv4}
\label{tab:bfcl_stats}
\small
\begin{tabular}{@{}lcccc@{}}
\toprule
\textbf{Dataset} & \textbf{Train} & \textbf{Test} & \textbf{Total} & \textbf{Success Rate} \\
\midrule
SFT & 720 & 80 & 800 & 100\% \\
\midrule
RCTP-FT (success) & 720 & 80 & 800 & 100\% \\
RCTP-FT (failure) & 720 & 80 & 800 & 0\% \\
\textbf{RCTP-FT Total} & \textbf{1,440} & \textbf{160} & \textbf{1,600} & \textbf{50\%} \\
\bottomrule
\end{tabular}
\end{table}

\paragraph{Stage 2: RL Data.}

\textit{Source.} For online RL, we use minimal task pointers that reference the original BFCLv4 questions. The BFCL environment adapter constructs the full system prompt and combined task prompt at runtime, enabling dynamic interaction with the simulated environment.

\textit{Split.} We use the same ID-based split as Stage 1: 720 train / 80 test. Table~\ref{tab:rl_split} shows the distribution across categories. The per-category counts are not exactly 180/20 because the split operates at the ID level first, and samples sharing an ID inherit that assignment.

\begin{table}[h]
\centering
\caption{RL Data Train/Test Split by Category}
\label{tab:rl_split}
\small
\begin{tabular}{@{}lccc@{}}
\toprule
\textbf{Category} & \textbf{Train} & \textbf{Test} & \textbf{Total} \\
\midrule
multi\_turn\_base & 182 & 18 & 200 \\
multi\_turn\_long\_context & 177 & 23 & 200 \\
multi\_turn\_miss\_func & 183 & 17 & 200 \\
multi\_turn\_miss\_param & 178 & 22 & 200 \\
\midrule
\textbf{Total} & \textbf{720} & \textbf{80} & \textbf{800} \\
\bottomrule
\end{tabular}
\end{table}

\textit{Format.} Each RL sample is a minimal task pointer:
\begin{verbatim}
{
  "conversations": [{"from": "human", "value": "[BFCL task: multi_turn_base_0]"}],
  "task_id": "multi_turn_base_0",
  "source_file": "BFCL_v4_multi_turn_base.json",
  "env_type": "bfcl"
}
\end{verbatim}

\paragraph{Format Alignment.}
All data phases enforce strict format alignment to ensure consistency across SFT, RCTP-FT, and RL:
\begin{enumerate}
    \item \textbf{Message Structure:} OpenAI chat format (\texttt{\{role, content, tool\_calls\}}).
    \item \textbf{Combined Task Prompt:} User instructions are merged into a single ``Step 1: ..., Step 2: ...'' format.
    \item \textbf{Schema Consistency:} All assistant messages include an explicit \texttt{tool\_calls} key (empty list if no calls).
\end{enumerate}

\subsubsection{Data Examples}
\label{app:examples}

We provide examples to illustrate the data formats used in each training stage.

\paragraph{Multi-Turn Trajectory Example.}
This example from \texttt{multi\_turn\_base\_1} demonstrates a task requiring file system manipulation. During SFT/RCTP-FT, the model observes multi-turn user queries as separate turns. During RL, the queries are merged into a single combined task prompt.

\begin{defaultbox}
\textbf{Task ID:} \texttt{multi\_turn\_base\_1} \\
\textbf{User Query 1:} ``I am alex. Check if the current directory is under my name and list all the visible and hidden contents in the current directory now, please.'' \\
\textbf{Agent Action 1:} \texttt{ls(a=True)} \\
\textbf{Observation 1:} \texttt{\{"status": "success"\}}

\textbf{User Query 2:} ``Go to workspace directory and move one of the 'log.txt' files into a new directory 'archive'.'' \\
\textbf{Agent Action 2:} \texttt{cd(folder="workspace")} \\
\textbf{Agent Action 3:} \texttt{mv(source="log.txt", destination="archive")}

\textbf{User Query 3:} ``Investigate within 'log.txt' for the occurrence of the keyword 'Error'.'' \\
\textbf{Agent Action 4:} \texttt{cd(folder="archive")} \\
\textbf{Agent Action 5:} \texttt{grep(file\_name="log.txt", pattern="Error")}
\end{defaultbox}

\paragraph{Reward-Conditioned Format Example.}
For RCTP-FT, we augment trajectories with reward tokens. Below shows how the same task is formatted for high-reward (expert) and low-reward (failure) trajectories:

\begin{orangebox}
\textbf{RCTP-FT Input (High Reward):} \\
\textbf{User:} Find \texttt{analysis\_report.csv} and move it to 'archive'... \\
\texttt{[Reward Goal: <|high\_reward|>]}

\textbf{Assistant:} \texttt{mv(source="analysis\_report.csv", destination="archive")}

\texttt{...}
\end{orangebox}

\begin{bluebox}
\textbf{RCTP-FT Input (Low Reward):} \\
\textbf{User:} Find \texttt{analysis\_report.csv} and move it to 'archive'... \\
\texttt{[Reward Goal: <|low\_reward|>]}

\textbf{Assistant:} \texttt{cp(source="analysis\_report.csv", destination="archive")} \\
\textit{(Copying instead of moving is a common failure mode)}

\texttt{...}
\end{bluebox}

\section{Model Details}
\label{app:model_details}

This appendix summarizes the exact model versions used in our experiments.

\subsection{Base Models (Open-Weights)}
We report the exact model identifiers in Table~\ref{tab:base_model_versions}.
\begin{table}[h]
\centering
\caption{Base model versions used in this paper.}
\label{tab:base_model_versions}
\small
\begin{tabular}{@{}ll@{}}
\toprule
\textbf{Model} & \textbf{Version / Identifier} \\
\midrule
LLaMA-3.1-8B-Instruct & meta-llama/Llama-3.1-8B-Instruct \\
Qwen2.5-7B-Instruct & Qwen/Qwen2.5-7B-Instruct \\
\bottomrule
\end{tabular}
\end{table}

\subsection{API Models}
We report the exact model identifiers in Table~\ref{tab:api_model_versions}.
\begin{table}[h]
\centering
\caption{API model versions evaluated on BFCLv4 (validation).}
\label{tab:api_model_versions}
\small
\begin{tabular}{@{}ll@{}}
\toprule
\textbf{Model} & \textbf{Version / Identifier} \\
\midrule
Opus-4.5 & claude-opus-4-5-20251101 \\
Sonnet-4.5 & claude-sonnet-4-5-20250929 \\
GLM-4.7 & glm-4.7 \\
Gemini-3-Pro & gemini-3-pro-preview \\
GPT-5.2 & gpt-5.2-2025-12-11 \\
\bottomrule
\end{tabular}
\end{table}

\section{Formal Reward Function Definition}
\label{app:reward_details}

This section provides the rigorous mathematical formulation of the trajectory-level reward function $R(\tau)$ used in RC-GRPO (see Sec.~\ref{sec:reward_function}), followed by a concrete calculation example.

\subsection{Mathematical Formulation}

\subsubsection{Notation}
We use the following notation:
\begin{itemize}
    \item $S_{\text{final}}$: The final state of the environment (API states, databases, etc.) after the agent's trajectory $\tau$.
    \item $S_{\text{gold}}$: The final state of the environment after executing the ground truth trajectory.
    \item $A_{\text{traj}}$: The set of all tool calls executed by the agent in trajectory $\tau$.
    \item $A_{\text{gold}}$: The set of all essential tool calls in the ground truth trajectory.
    \item $\mathbbm{1}[C]$: Indicator function (1 if condition $C$ is true, 0 otherwise).
\end{itemize}

For BFCLv4, the total reward $R(\tau)$ is the product of a state consistency check and an action coverage check:
\begin{equation}
    R(\tau) = R_{\text{state}} \cdot R_{\text{action}}
\end{equation}

\subsubsection{Reward Components}

\paragraph{State Consistency ($R_{\text{state}}$).}
This component verifies that the side effects of the agent's actions match the ground truth. This is critical for tasks where different action sequences can yield the same valid outcome (e.g., creating a file).
\begin{equation}
    R_{\text{state}} = \mathbbm{1}[S_{\text{final}} = S_{\text{gold}}]
\end{equation}
In implementation, this involves comparing the hash or deep equality of the environment's state dictionary.

\paragraph{Action Coverage ($R_{\text{action}}$).}
This component ensures that all required actions were performed. Unlike strict sequential matching, this allows for reordering of independent commutative actions.
\begin{equation}
    R_{\text{action}} = \mathbbm{1}\left[ \forall a^* \in A_{\text{gold}}, \exists a \in A_{\text{traj}} \text{ s.t. } \text{Match}(a, a^*) \right]
\end{equation}
where $\text{Match}(a, a^*)$ is true if and only if:
\begin{enumerate}
    \item $a.\texttt{name} = a^*.\texttt{name}$
    \item For every parameter $k, v$ in $a^*.\texttt{args}$, $a.\texttt{args}[k] = v$. (The agent may provide extra optional parameters, but must match all required/golden parameters).
\end{enumerate}

\subsection{Concrete Calculation Example}

Consider a task: ``Move \texttt{report.csv} to \texttt{/archive} and delete \texttt{temp.log}.''

\textbf{Ground Truth ($A_{\text{gold}}$):}
\begin{enumerate}
    \item \texttt{mv(src="report.csv", dst="/archive")}
    \item \texttt{rm(path="temp.log")}
\end{enumerate}

\textbf{Scenario 1: Perfect Execution (Success)}
\begin{itemize}
    \item Agent Actions: \texttt{rm("temp.log")}, then \texttt{mv("report.csv", "/archive")}.
    \item $R_{\text{state}} = 1$ (Filesystem state matches gold).
    \item $R_{\text{action}} = 1$ (Both \texttt{mv} and \texttt{rm} present with correct args, order ignored).
    \item \textbf{Total Reward:} $1 \cdot 1 = 1$.
\end{itemize}

\textbf{Scenario 2: Right Actions, Wrong State (Failure)}
\begin{itemize}
    \item Agent Actions: \texttt{mv("report.csv", "/archive")}, \texttt{rm("temp.log")}, but then \texttt{touch("temp.log")}.
    \item $R_{\text{action}} = 1$ (All essential actions executed).
    \item $R_{\text{state}} = 0$ (\texttt{temp.log} exists in final state, but shouldn't).
    \item \textbf{Total Reward:} $0 \cdot 1 = 0$.
\end{itemize}

\textbf{Scenario 3: Missing Action (Failure)}
\begin{itemize}
    \item Agent Actions: Only \texttt{mv("report.csv", "/archive")}.
    \item $R_{\text{state}} = 0$ (\texttt{temp.log} still exists).
    \item $R_{\text{action}} = 0$ (Missing \texttt{rm} call).
    \item \textbf{Total Reward:} $0 \cdot 0 = 0$.
\end{itemize}

\section{Experimental Settings Details}
\label{app:experiment_settings}

We conduct our experiments using 8 NVIDIA H200 GPUs.

This section details the hyperparameter configurations for RC-GRPO (Table~\ref{tab:rc_grpo_settings}), RCTP-FT (Table~\ref{tab:rctp_ft_settings}), and the main baselines (Table~\ref{tab:baseline_settings}).

\subsection{Proposed Method (RC-GRPO)}

We summarize the hyperparameter configurations for RC-GRPO (Table~\ref{tab:rc_grpo_settings}) and RCTP-FT (Table~\ref{tab:rctp_ft_settings}).

Table~\ref{tab:rc_grpo_settings} lists the hyperparameters used for the RC-GRPO experiments on BFCLv4.

\begin{table}[h]
    \centering
    \caption{Hyperparameter settings for RC-GRPO.}
    \label{tab:rc_grpo_settings}
    \begin{tabular}{lcc}
        \toprule
        \textbf{Hyperparameter} & \textbf{LLaMA-3.1-8B} & \textbf{Qwen2.5-7B} \\
        \midrule
        Learning Rate & $1 \times 10^{-6}$ & $1 \times 10^{-6}$ \\
        KL Coefficient ($\beta$) & 0.1 & 0.1 \\
        Group Size ($G$) & 5 & 5 \\
        Batch Size & 256 & 256 \\
        Clip Ratio & 0.2 & 0.2 \\
        Epochs & 400 & 400 \\
        \bottomrule
    \end{tabular}
\end{table}

\begin{table}[h]
    \centering
    \caption{Hyperparameter settings for RCTP-FT (reward-conditioned trajectory finetuning).}
    \label{tab:rctp_ft_settings}
    \begin{tabular}{lcc}
        \toprule
        \textbf{Hyperparameter} & \textbf{LLaMA-3.1-8B} & \textbf{Qwen2.5-7B} \\
        \midrule
        Learning Rate & $1 \times 10^{-5}$ & $1 \times 10^{-5}$ \\
        Batch Size & 16 & 16 \\
        Epochs & 5.0 & 5.0 \\
        Reward Tokens & \texttt{<|high\_reward|>}, \texttt{<|low\_reward|>} & \texttt{<|high\_reward|>}, \texttt{<|low\_reward|>} \\
        \bottomrule
    \end{tabular}
\end{table}

\subsection{Baselines}

Table~\ref{tab:baseline_settings} summarizes the hyperparameter configurations for the baseline methods.

\begin{table}[h]
    \centering
    \caption{Hyperparameter settings for baseline methods.}
    \label{tab:baseline_settings}
    \begin{tabular}{lcc}
        \toprule
        \textbf{Hyperparameter} & \textbf{LLaMA-3.1-8B} & \textbf{Qwen2.5-7B} \\
        \midrule
        \multicolumn{3}{l}{\textbf{SFT}} \\
        Learning rate & $1 \times 10^{-5}$ & $1 \times 10^{-5}$ \\
        Batch size & 16 & 16 \\
        Epochs & 2.0 & 2.0 \\
        Optimizer & AdamW & AdamW \\
        \midrule
        \multicolumn{3}{l}{\textbf{GRPO}} \\
        Learning rate & $1 \times 10^{-6}$ & $1 \times 10^{-6}$ \\
        Batch size & 64 & 64 \\
        Group size ($G$) & 5 & 5 \\
        KL coefficient ($\beta$) & 0.1 & 0.1 \\
        Clip ratio ($\epsilon$) & 0.2 & 0.2 \\
        \bottomrule
    \end{tabular}
\end{table}


\end{document}